\documentclass[a4paper,11pt]{article}
\usepackage{amsmath}
\usepackage{amssymb}
\usepackage{amsthm}
\usepackage{algorithm}
\usepackage{algorithmic}
\usepackage{epsf}
\usepackage{epsfig}
\usepackage{graphicx,color,rotating}
\usepackage{multirow,longtable}
\usepackage{graphicx}
\usepackage{ucs}
\usepackage[applemac]{inputenc}
\usepackage{makeidx}  

\newtheorem{thm}{Theorem}[section]

\newtheorem{lem}[thm]{Lemma}
\newtheorem{obs}[thm]{Observation}

\theoremstyle{remark}

\theoremstyle{definition}

\textheight 22cm \topmargin 0cm \textwidth 17cm
\parskip 3mm
\oddsidemargin 0mm \evensidemargin 0mm

\usepackage{makeidx}  
\usepackage{authblk}
\usepackage{mathrsfs}

\pagestyle{plain}
\begin{document}

\title {Tree Optimization Based Heuristics and Metaheuristics in Network Construction Problems}

\author{ Igor Averbakh$^1$\vspace*{0.25 cm}, Jordi Pereira$^2$ \\
$^1$ Department of Management, University of Toronto Scarborough\\
 and Rotman School of Management, University of Toronto,\\
1265 Military Trail, Toronto, Ontario M1C 1A4, Canada;\vspace*{0.25 cm}\\
$^2$ Faculty of Engineering and Sciences, Universidad Adolfo Ib\'{a}\~{n}ez,\\ Av. Padre Hurtado 750, Vi\~{n}a del Mar, Chile.\vspace*{0.25 cm}\\
emails: averbakh@utsc.utoronto.ca, jorge.pereira@uai.cl}

\maketitle





\textbf{Abstract.} We consider a recently introduced class of network construction problems  where edges of a transportation network need to be constructed by a server (construction crew). The server has a constant construction speed which is much lower than its travel speed, so relocation times are negligible with respect to construction times. It is required to find a construction schedule that minimizes a non-decreasing function of the times when various connections of interest become operational. Most problems of this class are strongly NP-hard on general networks, but are often tree-efficient, that is, polynomially solvable on trees. We develop a generic local search heuristic approach and two metaheuristics (Iterated Local Search and Tabu Search) for solving tree-efficient network construction problems on general networks, and explore them computationally. Results of computational experiments indicate that the methods have excellent performance.

\textbf{Key words:}  Network Design, Scheduling, Network Construction, Heuristics, Metaheuristics, Local Search.

\newpage

\section{Introduction}

In the last decade, a new research area has emerged that deals with operational planning of construction activities in transportation and communication networks. If classical network design problems address the question of \emph{which} network should be constructed to accomplish certain long-term strategic goals, the problems in this area address the question of \emph{how} the network should be constructed with the focus on the network's performance \emph{during} the construction period. In particular, given the available limited resources, the question is in what order different parts of the planned network should be constructed to optimize some short-term or medium-term goals. These issues are important since network construction typically takes long time, resources are always limited, different parts/connections often have different urgency/priority/demand, and operational performance of some connections may be needed much sooner than the whole network can become fully operational. 
This gives rise to optimization models that combine network design and scheduling features. Such models also combine different types of combinatorial structures and technical challenges characteristic for network design and scheduling problems, so they are not amenable to the methods developed in each of these two well-studied fields. 

The literature on network design with scheduling considerations can be classified into two categories. The first category is concerned with \emph{network improvement}: given an already functional and connected network, how should the network be dynamically modified by installing new edges to improve some overall characteristic of network performance. Two closely related lines of research represent this category. Baxter et al. \cite{baxter}, Engel et al. \cite{engel}, and Kalinowski et al. \cite{kalinowski} introduce and study \emph{Incremental Network Design} problems, where at each period of a finite horizon one new edge is installed, and the goal is to optimize the cumulative value over all periods of some performance metric such as the minimum spanning tree length \cite{engel}, shortest path length \cite{baxter}, or maximum flow value \cite{kalinowski}. Cavdaroglu et al. \cite{cavdaroglu}, Nurre et al. \cite{nurre} and Nurre and Sharkey \cite{nurre2} introduce and develop a more general \emph{Integrated Network Design and Scheduling} framework that can model many environments and objectives. In this framework, arcs are installed by a number of identical crews that operate in parallel, and the goal is again to optimize the cumulative value of some network performance metric over a finite horizon. Additional references can be found in \cite{nurre2}.

The second category is specifically concerned with \emph{connectivity issues}, that is, with the process of building the network where the times of establishing / restoring connections between important areas are optimized. An early related work is Guha et al. \cite{guha} who study the power outage recovery problem where connectivity between generator nodes and customer nodes in an electric power network is destroyed as a result of the failure of some relay nodes, and it is required to restore the connectivity by a multi-stage repair of the failed relay nodes. At every stage, only a subset of the failed nodes can be repaired due to a limited budget, and the total waiting time of the disconnected customers is minimized.  In \cite{averbakhIIE, averbakhEJOR, averbakhJOC}, network construction / connectivity restoration problems (further referred to as \emph{Network Construction} (NC) problems) are considered, and these problems will be the focus of this paper. In single-server NC problems, the network is constructed by a server (construction crew) whose construction speed is much slower than the travel speed, so its relocation times within the network are negligible with respect to construction times. The goal is to minimize a scheduling objective related with the times when specific vertices become connected to the operational part of the network, or with the times when certain pairs of vertices become connected. Such problems arise, for example, in planning emergency restoration operations for networks that are partially destroyed/damaged as a result of a disaster, where some vertices / parts of the network have been disconnected from the main network and need urgent re-connection as people/equipment may be trapped there, with different degrees of urgency/priority. Discussions of different types of applications for NC problems can be found in \cite{averbakhIIE, averbakhEJOR, averbakhJOC}. We note that in the context of some applications of NC problems such as emergency restoration operations, even a small difference in solution quality can have a very serious impact, as people's lives and safety may be at stake.

Several NC problems have been studied in  \cite{averbakhIIE, averbakhEJOR, averbakhJOC} with the purpose of developing exact branch-and-bound algorithms based on lower bounds on the optimal objective value derived from an analysis of the combinatorial structure of the problems. Some fast heuristics were also suggested for these problems \cite{averbakhIIE, averbakhEJOR, averbakhJOC}; these heuristics were used for obtaining starting solutions for the branch-and-bound algorithms, and for obtaining approximate solutions for larger instances that could not be solved by exact methods.

Most studied NC problems are strongly NP-hard on general networks \cite{averbakhIIE, averbakhEJOR, averbakhJOC}; on trees, some of them are polynomially solvable and some are NP-hard \cite{averbakhIIE, averbakhEJOR, averbakhJOC}. NC problems that are polynomially solvable on trees are called \emph{tree-efficient}. In this paper, we present a heuristic and two metaheuristics for solving tree-efficient NC problems on general networks. The methods are \emph{generic} in the sense that they can be used for any NC problem that has the tree-efficiency property, and show excellent performance for the specific NC problems studied in \cite{averbakhIIE, averbakhEJOR, averbakhJOC}. In particular, our heuristic outperforms the heuristics from  \cite{averbakhIIE, averbakhEJOR, averbakhJOC}, and the metaheuristics find optimal solutions for almost all instances for which optimal solutions are known.

The paper is organized as follows. In Section \ref{ncp}, we formally define NC problems and provide their classification. Section \ref{localsearch} develops a generic local search heuristic, defines neighborhoods of interest, and provides some auxiliary results. Section \ref{metaheuristics} describes the two metaheuristics that we explore. Section \ref{experiments} discusses computational results, and Section \ref{conclusions} provides some conclusions.

\section{Network construction problems} \label{ncp}

Let us define NC problems that we consider.
Given is a connected undirected network $G=(V,E)$ with a set $V$ of $n$ vertices and a set $E$ of $m$ edges with positive lengths. 
A pair of different vertices of $G$ will be called a \emph{vertex pair}. For any vertex pair $\{v,u\}$, the shortest path distance between $v$ and $u$ will be denoted $d(v,u)$. For any subnetwork (or the set of edges) $G'$, the total length of the edges of $G'$ will be denoted $L(G')$. The edges of the network represent connections (e.g., roads) that need to be constructed; initially, there are only vertices. Construction is done by a server (construction crew) that can construct one unit of length of the network per unit of time, so the length of an edge represents the time required for constructing the edge. The server starts working at time 0. For any vertex pair $\{u, v\}$, its \emph{connection time} $C(u,v)$ is the time when the pair becomes connected by an already constructed path. It is required to find a feasible construction schedule that minimizes some nondecreasing function of connection times of different vertex pairs.

The first key assumption is that travel times are negligible with respect to construction times. That is, if at some time the server wants to relocate from its current position in the network to another point of the network, this relocation, if feasible, can be done instantaneously. The second key assumption is about which relocations are feasible. We consider two settings. In the \textbf{\emph{internal transportation}} (IT) setting, we assume that the server can travel only within the already constructed part of the network and is initially located at a special vertex $r$ called \emph{the depot}. That is, in this setting, only relocations (travel) within the already constructed part of the network (which then will always be connected) are feasible. In the \textbf{\emph{external transportation}} (ET) setting, we assume that at any time, a relocation from any point of the network to any other point is feasible even if these points are not connected currently, since some modes of transportation that do not depend on the network under construction are available to the server. In the external transportation setting, the initial location of the server  is irrelevant for the problem.

Since the objective function is nondecreasing in connection times, it is straightforward to see that there are optimal schedules without preemption (that is, if the server starts constructing an edge, it finishes constructing the edge without interruptions) such that initially the edges of some spanning tree are constructed \cite{averbakhIIE, averbakhEJOR, averbakhJOC}. Hence, we will consider only such schedules. The edges of this spanning tree are called \emph{essential}; all other edges are called \emph{unessential}. Clearly, the choice and the order of construction of essential edges define uniquely the objective function value, since when all essential edges have been constructed, all vertex pairs are connected. Therefore, \emph{a feasible solution is defined by a choice of the spanning tree of essential edges, and a choice of the order of construction for the essential edges}. Any spanning tree is a possible choice for the tree of essential edges. Given a spanning tree of essential edges, any order of their construction is feasible in the external transportation setting. In the internal transportation setting, only the orders of construction that ensure that at any time the already constructed edges form a connected subtree that includes the depot are feasible. In the remainder of the paper, \emph{by a solution we understand a feasible sequence of $n-1$ edges that form a spanning tree}. Such a sequence will be called \emph{e-sequence}. For a solution $S$, let $F(S)$ be its objective function value.

With respect to which connection times are important and can be used to form an objective function, we distinguish between two classes of NC problems. In \emph{pairwise connection} (PC) problems, connection times of any vertex pairs (up to $0.5n(n-1)$ quantities) may be relevant and included in the objective function. In \emph{vertex recovery} (VR) problems, only the connection times of vertex pairs that include the depot $r$   are relevant and may be used in the objective function ($n-1$ quantities); for a vertex $v$,  the corresponding connection time $R(v)=C(v,r)$ is called the \emph{recovery time of $v$}. So, a VR problem minimizes a nondecreasing function of the recovery times of vertices. (In the context of emergency restoration operations, in VR problems, the depot $r$ may represent the main network, and other vertices may represent parts of the original network that have been disconnected from the main network as a result of a disaster, see   \cite{averbakhIIE, averbakhEJOR}.)    Hence, with respect to the setting and relevant connections, we have four groups of NC problems: ITPC, ETPC, ITVR, and ETVR (e.g., ETPC corresponds to external transportation pairwise connection). However,  since the objective function is assumed to be non-decreasing in connection times, the group ETVR has no independent importance and is equivalent to ITVR (it is straightforward to observe that for any ETVR problem, there is always an optimal solution where at any time the already constructed edges form a connected subtree that includes the depot). So, we do not consider ETVR problems.

The following heuristic approaches were used in \cite{averbakhIIE, averbakhEJOR, averbakhJOC} to obtain good approximate solutions on general networks for the tree-efficient NC problems studied in these papers:

\textbf{Heuristic MST}. Find a minimum spanning tree for the network $G$, and solve the problem restricted to this tree (that is, the minimum spanning tree is considered as the tree of essential edges, and the optimal order of constructing these edges is found using the polynomial algorithm for solving the problem on a tree).

\textbf{Heuristic MST combined with some local search}. A local search is used for improving the solution obtained by Heuristic MST. Since NC problems combine scheduling and network design structures, some of the local search approaches used scheduling-based neighborhoods \cite{averbakhIIE, averbakhEJOR} and some used network-based neighborhoods \cite{averbakhJOC}. They extensively used tree-efficiency, by finding the optimal order of constructing the edges for various spanning trees that the algorithms produced as candidates for the tree of essential edges.

It was observed in  \cite{averbakhIIE, averbakhEJOR, averbakhJOC} that for tree-efficient NC problems, Heuristic MST combined with local search typically produced good-quality approximate solutions. This observation suggests that local-search based metaheuristics may be effective for obtaining optimal or near-optimal solutions on general networks for tree-efficient NC problems. (Available exact methods can handle only instances of small and medium sizes.) This idea motivated the research presented in this paper, with the goal of developing local-search based metaheuristics that use tree-efficiency, comparing different designs and types of neighborhoods, and studying their effectiveness. An additional goal was to develop a universal generic local-search based heuristic approach applicable to any tree-efficient NC problems.

This paper makes the following main contributions:

1. We suggest a universal heuristic approach for tree-efficient NC problems on general networks. The approach is a generalized and somewhat enhanced version of the local search heuristic used in \cite{averbakhJOC} for a specific NC problem.  The approach uses network-based neighborhoods, is applicable to different types of tree-efficient NC problems with different objective functions, and outperforms the heuristics from \cite{averbakhIIE, averbakhEJOR, averbakhJOC}.

2. We suggest and study experimentally two local search based metaheuristics (Tabu Search (TS) and Iterated Local Search (ILS)), which are the first metaheuristics developed for NC problems. A comparison of the solutions obtained by the metaheuristics with the exact solutions obtained by the branch-and-bound algorithms in \cite{averbakhIIE, averbakhEJOR, averbakhJOC} shows that the metaheuristics are extremely effective for tree-efficient NC problems, with ILS performing slightly better than TS. The metaheuristics that use network-based neighborhoods found optimal solutions in almost all cases where the optimality could be verified by an exact algorithm; these metaheuristics are applicable to different types of tree-efficient NC problems with different objective functions.

3. By conducting extensive computational experiments for tree-efficient NC problems, we show that network-based neighborhoods are more effective than scheduling-based neighborhoods for local search in the context of both heuristics and metaheuristics, consistently across different classes of tree-efficient NC problems. In particular, our local search heuristic that uses network-based neighborhoods performs significantly better for problems studied in \cite{averbakhIIE, averbakhEJOR} than the local search heuristics from these papers that use scheduling-based neighborhoods; the metaheuristics that use network-based neighborhoods perform significantly better than their versions with scheduling-based neighborhoods.

Some additional conclusions and insights will be mentioned in our detailed discussion of the experimental results.

\section{Local search} \label{localsearch}

\subsection{Neighborhoods and auxiliary procedures}\label{neighborhoods}

Consider a generic NC problem that we will call \textbf{Problem NC}. The only additional assumption we make is that Problem NC is tree-efficient. Then, for any spanning tree $T$, the optimal sequence of constructing the edges of $T$ can be obtained in polynomial time if $T$ is considered as the tree of essential edges. This e-sequence will be denoted $ES(T)$ (ES stands for ``edge sequence") and will be assumed to be unique for $T$; some consistent rule can be used for breaking ties. 
Thus, to specify an optimal solution, it is sufficient to specify its spanning tree of essential edges. 

An \emph{edge-exchange move} for a spanning tree $T$ consists of adding one edge to $T$ and removing one of the edges of the obtained cycle. An edge-exchange move transforms $T$ into another spanning tree; let $N(T)$ be the set of all spanning trees that can be obtained from $T$ by a single edge-exchange move. Observe that $|N(T)|=O(mn)$. For a solution with a spanning tree $T$ of essential edges, its \emph{edge-exchange neighborhood}  is $\{ES(T')~|~T'\in N(T)\}$. That is, to obtain an edge-exchange neighbor of the current solution, we apply an edge-exchange move to its spanning tree of essential edges which results in another spanning tree $T'$, and then obtain the optimal order of constructing the edges of $T'$ using tree-efficiency of Problem NC.  The edge-exchange neighborhood is network-based since a neighbor is defined via a change in the network representation of a solution (the tree of essential edges). In the remainder of the paper, the terms ``edge-exchange neighborhood" and ``network-based neighborhood" will be used interchangeably.

To define scheduling-based neighborhoods, let us first consider IT problems. A sequence of all $n-1$ non-depot vertices of $G$ will be called a \emph{v-sequence}. 
For any v-sequence $S'$, consider the spanning tree that is obtained using the following procedure.

\textbf{Procedure A-IT}. \emph{Input:}  A v-sequence $S'$.\\
\emph{Output:} spanning tree $T'(S')$.\\
Set  $E'=\emptyset$, $V'=\{r\}$, $T'=(V',E')$. (Throughout the course of the procedure, $T'$ will be a subtree of $G$, becoming a spanning tree at the end.)\\
Until  $V'$ includes all vertices of $G$, do\\ 
(Main iteration)\\
\emph{Begin}\\
  $~~~~~$Select the first  vertex in $S'$ that is still not spanned by the current tree $T'$ (that is, not included in $V'$); let it be vertex $v$. Find a shortest path from  $v$ to $T'$ in $G$, and include the edges of this path in $E'$ and the vertices of the path in $V'$. Update $T'=(V', E')$.
 \\
\emph{End};\\
Output $T'(S')=T'$.

Procedure A-IT obtains a spanning tree $T'(S')$ (clearly, no cycles are created). Let us discuss the complexity of Procedure A-IT. We assume that as a \textbf{pre-processing}, the Floyd-Warshall algorithm \cite{ahuja} is run that finds the shortest-path distances for all vertex pairs in $G$. For any  vertex pair $\{v, u\}$, the algorithm also records the last vertex before $u$ in a shortest path from $v$ to $u$ (we call this information \emph{backtracking tips}). With the backtracking tips, a shortest path between any two vertices can be obtained quickly (in time linear in the length of the path). While Procedure A-IT can be used many times, the Floyd-Warshall algorithm needs to be run only once, so it is viewed as pre-processing, and its complexity $O(n^3)$ is not counted towards the complexity of Procedure A-IT. Then, within an iteration of Procedure A-IT, to find a shortest path from $v$ to $T'$, we just need to check the distances $d(v,u)$, $u\in V'$, and for the vertex $u'\in V'$ that corresponds to the smallest such distance, to obtain the shortest path from $v$ to $u'$ using the backtracking tips. Hence, each iteration of the procedure takes $O(n)$ time, and there are no more than $n-1$ iterations. We obtain the following result.

\begin{thm}\label{complexity A-IT}
Given the inter-vertex shortest-path distances for $G$ and the backtracking tips, Procedure A-IT can be implemented in $O(n^2)$ time.
\end{thm}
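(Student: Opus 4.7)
The plan is to bound the work separately across iterations and within a single iteration. First I would observe that each main iteration strictly enlarges $V'$ (the selected vertex $v$ is new, and the intermediate vertices on the $v$-to-$T'$ path are only added if they were not already in $V'$), while $|V'|$ starts at $1$ and must reach $n$. Hence the number of iterations is at most $n-1$.

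For the per-iteration cost I would fix a simple data-structure set-up: an indicator array $\chi[\cdot]$ of length $n$ for membership in $V'$, a list of the current members of $V'$, and a single forward-moving pointer into the v-sequence $S'$. Then I would analyze the four sub-steps of a main iteration: (a) locating the first vertex of $S'$ not in $V'$, using $\chi$ to test membership in $O(1)$ and advancing the $S'$-pointer without ever backtracking; (b) computing $u^\star = \arg\min_{u \in V'} d(v,u)$ by scanning the list of members of $V'$, which takes $O(|V'|) = O(n)$ time using the pre-processed distances; (c) reconstructing the shortest $v$-to-$u^\star$ path by following the backtracking tips, which costs time linear in the number of edges of the path; (d) appending those edges to $E'$ and those vertices to $V'$ (and updating $\chi$), at cost linear in the size of the added segment.

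Next I would amortize steps (a), (c), and (d) across all iterations. Because the pointer in (a) never moves backward, the total work of (a) is $O(n)$ across the entire execution. Because the edges added in (c)--(d) are always new (a shortest $v$-to-$T'$ path cannot re-use an edge incident to $V'$ without contradicting the choice of $u^\star$), and the final tree contains exactly $n-1$ edges, the total work of (c) and (d) across all iterations is $O(n)$ as well. The dominant cost is therefore step (b), which is $O(n)$ per iteration and $O(n^2)$ in total, matching the stated bound.

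The main obstacle is really just the bookkeeping argument for step (c): one has to be sure that the edges discovered by a single shortest-path reconstruction are genuinely disjoint from the current $E'$, so that the sum of path lengths over iterations cannot exceed $n-1$. This follows from the optimality of $u^\star$ (any path from $v$ that enters $V'$ before reaching $u^\star$ already provides a shorter connection), so the standard amortization goes through and the $O(n^2)$ bound is immediate.
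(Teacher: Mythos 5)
Your argument is correct and follows the same route as the paper's: there are at most $n-1$ iterations, each dominated by the $O(n)$ scan of the distances $d(v,u)$ over $u\in V'$, with the shortest path then reconstructed from the backtracking tips in time linear in its length. The additional amortization of steps (a), (c), (d) down to $O(n)$ total is sound (and the disjointness claim for the added edges does follow from the minimality of $u^\star$ together with positive edge lengths), but it is not needed for the $O(n^2)$ bound, since each of those steps is already $O(n)$ per iteration.
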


 For IT problems, a solution defines a v-sequence according to the order of vertex recovery (the \emph{vertex recovery sequence}). Given  a v-sequence $S$, a \emph{vertex-shift move} corresponds to changing the position of one vertex in the sequence by moving it closer to the beginning of the sequence (by one or more positions). Let $N'(S)$ be the set of all v-sequences that can be obtained from $S$ by a single vertex-shift move. Observe that $|N'(S)|=O(n^2)$.

For a solution with a vertex recovery sequence $S$, its \emph{vertex-shift neighborhood} is defined as $\{ES(T'(S'))~|~S'\in N'(S)\}$, where $T'(S')$ is the spanning tree obtained by Procedure A-IT with input $S'$. That is, to obtain a vertex-shift neighbor of a solution, we apply a vertex-shift move to its vertex recovery sequence which results in a new v-sequence $S'$, then obtain the spanning tree $T'(S')$ using Procedure A-IT, and then obtain the optimal order of constructing the edges of $T'(S')$ using tree-efficiency of Problem NC. The vertex-shift neighborhood is scheduling-based since a neighbor is defined via a change in the vertex recovery sequence.

Consider now ET problems; as we discussed, we consider only ETPC problems.  A sequence of all $0.5n(n-1)$ vertex pairs of $G$ is called a \emph{p-sequence}. A solution defines the corresponding p-sequence according to the order of connection for the vertex pairs, which is called the \emph{pairs connection sequence} for the solution.   Some vertex pairs are connected simultaneously, but we assume that some unambiguous rule (e.g., lexicographic) is used for breaking such ties, so the pairs connection sequence for a solution is defined uniquely.  The pairs connection sequence for a solution consists of $n-1$ consecutive groups of vertex pairs, where the vertex pairs in each group are connected simultaneously as a result of constructing one essential edge; these groups will be referred to as \emph{p-groups}. 
In ET problems, constructing an essential edge results in joining two connected components of the previously constructed forest (the initial forest consists of $n$ isolated vertices), and this leads to connecting all pairs of vertices from these components, which defines the corresponding p-group.

For any p-sequence $S''$, consider the spanning tree that is obtained by the following procedure.

\textbf{Procedure A-ET}. \emph{Input:} a p-sequence $S''$.\\
\emph{Output:} spanning tree $T''(S'')$.\\
Set $G''=G$, $E''=\emptyset$, $T''=(V,E'')$. (At this point, $T''$ consists of $n$ isolated vertices. Throughout the course of the procedure, $T''$ will be a spanning forest, becoming a spanning tree for $G$ at the end. The vertices of $G''$ will represent the connected components of $T''$.)\\
Until $T''$ is a spanning tree for $G$ (i.e., until $|E''|=n-1$), do\\
(Main iteration)\\
\emph{Begin}\\
 $~~~~~$Select the first vertex pair from $S''$ that is still not connected by the edges in $T''$; let it be the pair $\{u, v\}$. Let $u''$, $v''$ be the vertices of $G''$ that correspond to $u$, $v$.\\
Find a shortest path between the vertices $u''$ and $v''$ in the current network $G''$ and include the edges of this path in $E''$. Update $T''=(V,E'')$. 
\\
  $~~~~~$Modify $G''$ by 
contracting the path into a single vertex. This may create loops and multi-edges. Delete all loops, and if there are several edges between two vertices, all but the shortest one are deleted (with some unambiguous rule for breaking ties).\\
\emph{End;}\\
Output $T''(S'')=T''$.

Procedure A-ET obtains a spanning tree $T''(S'')$ (clearly, no cycles are created).

\begin{thm}\label{complexity A-ET}  Procedure A-ET can be  implemented in $O(n^3)$ time.
\end{thm}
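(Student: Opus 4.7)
The plan is to bound the cost of each of the at most $n-1$ main iterations by $O(n^2)$, and to amortize the scan of $S''$ across the whole procedure. First observe that every main iteration strictly enlarges $E''$, since the shortest path between the distinct vertices $u''$ and $v''$ of $G''$ contains at least one edge; because the final forest $T''$ has $n-1$ edges, the main loop runs at most $n-1$ times.

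Within a single iteration, three costs must be controlled. (i) Locating the first pair of $S''$ not yet connected by $T''$: I would maintain throughout the procedure a single pointer into $S''$ that only advances, together with a union--find structure on $V$ whose classes are the connected components of the current forest $T''$. Each candidate pair is tested in near-constant time by two $\mathrm{Find}$ calls, and no position of $S''$ is revisited once passed; since $|S''|=0.5n(n-1)=O(n^2)$, the total scanning cost over the whole procedure is $O(n^2)$, i.e.~$O(n)$ amortized per iteration. (ii) Shortest path in $G''$: the deletion of loops and of all but the shortest representative of each multi-edge keeps $G''$ on at most $n$ vertices and at most $\binom{n}{2}$ edges at all times, so Dijkstra runs in $O(n^2)$ using a dense adjacency-matrix representation. (iii) Contraction: the computed path has at most $n-1$ edges and at most $n$ vertices, and replacing these by a single super-vertex while discarding loops and resolving parallel edges (keeping the shortest one) can be done by one pass over the $O(n)$ matrix entries incident to each path vertex, which costs $O(n^2)$ per iteration; the corresponding Union operations update the structure maintained in (i).

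Summing (i)--(iii), each iteration runs in $O(n^2)$, and with at most $n-1$ iterations we obtain the claimed $O(n^3)$ overall bound. The main obstacle is the bookkeeping in step (iii): because a single iteration may collapse many vertices of $G''$ at once, the rerouting of edges must be charged to the newly created super-vertex rather than to each vertex of the path. Storing $G''$ as a dense adjacency matrix (with absent entries marked $+\infty$) makes loop removal, parallel-edge resolution, and the shortest-edge tie-break constant-time per matrix cell, which keeps the per-iteration work at $O(n^2)$ and yields the stated complexity.
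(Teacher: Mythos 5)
Your proof is correct and achieves the stated $O(n^3)$ bound, but it takes a genuinely different route from the paper's at the step the paper identifies as the bottleneck. The paper maintains \emph{all-pairs} shortest-path distances and backtracking tips for the evolving network $G''$: it views each path contraction as a sequence of single-edge contractions and, when an edge $(x,y)$ is merged into $z$, updates every distance $d'(u,v)$ in constant time as a minimum over routes through $x$ and $y$; since only $n-1$ edges are contracted over the whole run, this costs $O(n^2)$ per contracted edge and $O(n^3)$ in total, after which each required path is retrieved in time linear in its length. You instead observe that an iteration needs only a single source-to-target shortest path in the current $G''$, so a fresh run of Dijkstra on a dense adjacency-matrix representation ($O(n^2)$ per iteration, $O(n^3)$ over at most $n-1$ iterations) suffices, and you make explicit two pieces of bookkeeping the paper leaves implicit: locating the first not-yet-connected pair of $S''$ via a monotone pointer plus union--find ($O(n^2)$ amortized over the whole procedure, which is valid because connectivity is monotone so a pair once skipped never needs revisiting), and resolving loops and parallel edges after collapsing a path by a pass over the matrix rows of the collapsed vertices. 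Your version is more elementary and self-contained, since it avoids having to justify the correctness of the incremental all-pairs update formula; the paper's version keeps complete distance information available throughout and recovers each path in time proportional to its length rather than $O(n^2)$. Both arguments are sound and yield the same overall complexity.
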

\begin{proof} 
Given the inter-vertex shortest-path distances and backtracking tips for network $G''$, a shortest path between any two vertices of $G''$ is obtained in time linear in the length of the path. Contracting the path takes time linear in the sum of the degrees of the vertices of the path, so contracting all paths in the course of the procedure  takes $O(n^2)$ time. After each iteration of the procedure, after contracting the path chosen in this iteration, we will obtain the shortest-path distances and backtracking tips for the updated network $G''$. This is clearly the bottleneck of complexity. Using the Floyd-Warshall algorithm for this purpose in each iteration would be inefficient since it takes $O(n^3)$ time, and as there are $O(n)$ iterations, this would result in $O(n^4)$ complexity for the procedure. However, the shortest-path distances and backtracking tips can be updated more directly and efficiently. Contracting a path can be represented as a sequential contraction of its edges. Suppose we have shortest-path distances $d(\cdot,\cdot)$ and backtracking tips for the current network $G''$, and we contract an edge $(x,y)$ in $G''$ into a single vertex $z$. Then, for any vertices $u$, $v$ of the modified network $G''$, the shortest-path distance $d'(u, v)$ is obtained in constant time as follows:
\[
d'(u, v)=
   \left\{
       \begin{array}{ll}
            \min\{ d(u,v), ~d(u,x)+d(y,v),~d(u,y)+d(x,v)\} & \mbox{if} ~u\ne z,~v\ne z,\\
            \min\{d(u, x),~d(u,y)\} & \mbox{if}~v=z,\\
            \min\{d(x,v),~d(y,v)\} & \mbox{if}~ u=z.
       \end{array}
    \right.
\]
Backtracking tips are updated similarly. So, updating all shortest-path distances and backtracking tips takes $O(n^2)$ time for each contracted edge, and, since there will be $n-1$ contracted edges, $O(n^3)$ time over the course of the procedure.
\end{proof}

For ETPC problems, a scheduling-based neighborhood is defined using vertex pairs. Given the pairs connection sequence $S$ for some solution, a \emph{pair-shift move} corresponds to changing the position of one vertex pair in the sequence by moving it closer to the beginning of the sequence, namely to the first position of some p-group that precedes the p-group of this vertex pair. If a vertex pair in position $j$ is moved to position $i$, $i<j$, then the vertex pairs that were in positions $i, i+1,...,j-1$ are moved one position up. Since there are $n-1$ p-groups, any vertex pair has no more than $n-2$ positions where it can be moved. Let $N''(S)$ be the set of all p-sequences that can be obtained from $S$ by a single pair-shift move. Observe that $|N''(S)|=O(n^3)$.

For a solution with the pairs connection sequence $S$, its \emph{pair-shift neighborhood} is defined as $\{ES(T''(S''))~|~S''\in N''(S)\}$, where $T''(S'')$ is the spanning tree obtained by Procedure A-ET with input $S''$. That is, to obtain a pair-shift neighbor of the current solution, we apply a pair-shift move to its pairs connection sequence which results in a new p-sequence $S''$, then obtain the spanning tree $T''(S'')$ using Procedure A-ET, and then obtain the optimal order of constructing the edges of $T''(S'')$ using tree-efficiency of Problem NC. The pair-shift neighborhood is scheduling-based since a neighbor is defined via a change in the p-sequence.


Using pair-shift neighborhoods for ETPC problems in the general case would clearly be time-consuming for larger instances (the cardinality of a pair-shift neighborhood is $O(n^3)$ and applying Procedure A-ET to a p-sequence takes $O(n^3)$ time, so a full search of a pair-shift neighborhood would take $O(n^6)$ time).  However, it is reasonable to expect that in many applications the number of \emph{relevant vertex pairs} (those that define the objective function) will not be large. Then, we can include only relevant vertex pairs in the pairs connection sequence, and consider only them for pair-shift moves that define neighbors, with straightforward modification of related procedures, which would reduce the complexity of the search.

\subsection{The solution improvement procedure}

The following observation clarifies the relation between Procedures A-IT and A-ET.

\begin{obs} Suppose that Problem NC is an IT problem, and consider any solution (e-sequence); let $S'$ be its vertex recovery sequence,  and let $S''$ be its pairs connection sequence. Then applying Procedure A-ET to $S''$ is equivalent to applying Procedure A-IT to $S'$ (i.e. produces the same spanning tree if no ties arise in the course of the procedures, or assuming that  tie-breaking rules in Procedures A-ET and A-IT are consistent).
\end{obs}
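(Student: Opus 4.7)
The plan is to prove the observation by induction on the iteration count, running Procedures A-IT and A-ET in parallel and tracking three coupled invariants.

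First I would describe the structure of $S'$ and $S''$ for an IT solution. In the IT setting each essential edge attaches exactly one new non-depot vertex to the current subtree containing the depot, so $S'=(v_1,\dots,v_{n-1})$ is the recovery order of the non-depot vertices, and $S''$ decomposes into consecutive p-groups $P_1,\dots,P_{n-1}$ with $P_j=\{\{v_j,u\}:u\in\{r,v_1,\dots,v_{j-1}\}\}$. This combinatorial description of $S''$ is what will let me couple A-ET with A-IT.

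Next I would set up the induction. After $k$ iterations of each procedure I want to maintain: (a) the partial trees produced by A-IT and A-ET coincide and span a common vertex set $V'_k$; (b) in the contracted network $G''$ of A-ET, all vertices of $V'_k$ have been merged into a single super-vertex $z$; and (c) for every $x\notin V'_k$,
\[
d_{G''}(z,x)\;=\;\min_{u\in V'_k}d(u,x),
\]
where $d_{G''}$ denotes shortest-path distance in the current $G''$ and $d$ is the shortest-path distance in $G$. For the inductive step, let $j$ be the smallest index with $v_j\notin V'_k$. A-IT then selects $v_j$ and appends a shortest $G$-path from $v_j$ to $V'_k$. On the A-ET side, for every $i<j$ the vertex $v_i$ already lies in $V'_k$, so every pair in $P_i$ is already connected by the current common tree; hence the first not-yet-connected pair of $S''$ lies in $P_j$ and has the form $\{u_0,v_j\}$ with $u_0\in V'_k$. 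Procedure A-ET then searches for a shortest $G''$-path from $z$ to $v_j$; by invariant (c) this path has length $d(V'_k,v_j)$ and, read back through the backtracking tips, lifts to an actual shortest $G$-path from $v_j$ to its nearest vertex in $V'_k$, which is exactly the path A-IT is appending (under the consistent tie-breaking noted in the statement).

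The step I expect to be the main obstacle is maintaining invariant (c) through the contractions performed by A-ET. I would handle it via a standalone lemma: contracting a path in the current $G''$ into a single vertex and pruning each resulting bundle of parallel edges to the shortest preserves the identity $d_{\mathrm{new}}(\mathrm{super},x)=\min_{u\in\mathrm{super}}d(u,x)$ for every $x$ outside the super-vertex. The easy direction converts any $G$-path from a super-member $u$ to $x$ into a $G''$-path of no greater length by collapsing the internal portion to the super-vertex. The reverse direction requires unfolding: any $G''$-path from $z$ to $x$ lifts, via the record of previously contracted shortest paths, to a $G$-walk of the same length starting at some vertex of the super-vertex, bounding $\min_u d(u,x)$ from above by $d_{G''}(z,x)$. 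Parallel-edge pruning is harmless because only the shortest parallel edge ever participates in a shortest path. Iterating this lemma across the successive contractions of A-ET yields invariant (c) at every step and closes the induction.
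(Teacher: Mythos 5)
Your proof is correct and follows essentially the same route as the paper's: an induction on the iteration count showing that Procedures A-IT and A-ET build identical edge sets, driven by the fact that the first not-yet-connected pair of $S''$ joins the first not-yet-recovered vertex of $S'$ to an already-spanned vertex, so both procedures append the same shortest path. Your invariants (b) and (c) and the contraction lemma simply make explicit a step the paper asserts without detail (that the shortest path from the super-vertex in the contracted $G''$ coincides with the shortest path from $v$ to the current subtree in $G$), so this is a more careful write-up of the same argument.
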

\begin{proof}
 Since we consider an e-sequence for an IT problem, for any $l\in [1:n-1]$, the first $l$ edges of the e-sequence form a connected subtree of $G$ that contains the depot $r$. Using induction, we will show that after any iteration $k$, the sets of edges $E'$ and $E''$ in Procedures A-IT and A-ET, respectively, are the same (assuming no ties arise). Initially, $E'=E''=\emptyset$. Suppose that after iteration $k-1$, $E'=E''$ (induction hypothesis). Let us prove that then $E'=E''$ after iteration $k$. After iteration $k-1$ of Procedures A-IT and A-ET, the set $E'=E''$ forms a subtree of $G$ that includes the depot $r$. If a vertex $v$ is selected by Procedure A-IT at iteration $k$, then a vertex pair that contains $v$ and some vertex already spanned by the edges from $E'=E''$ will be selected by Procedure A-ET at its iteration $k$. Then the edges of the same path (the shortest path between $v$ and the closest vertex to $v$ among the vertices of subtree $T'$ in $G$) will be added to $E'$ and $E''$ at iteration $k$. The observation follows immediately.
\end{proof}

Hence,  Procedure A-IT can be viewed as a simplified version of  Procedure A-ET adopted to IT problems.

\begin{lem}\label{Lemma1}
Consider any e-sequence $S$ for Problem NC.\\
a) If Problem NC is an IT problem and $S'$ is the vertex recovery sequence for $S$, then for the spanning tree $T'(S')$ obtained by applying Procedure A-IT to $S'$, the corresponding solution $ES(T'(S'))$ is not worse than $S$, i.e.,
\begin{equation}\label{Eq1}
F(ES(T'(S')))\le F(S).
\end{equation}
b) If Problem NC is an ET problem and $S''$ is the pairs connection sequence for $S$, then for the spanning tree $T''(S'')$ obtained by applying Procedure A-ET to $S''$, the corresponding solution $ES(T''(S''))$ is not worse than $S$, i.e.
\begin{equation}\label{Eq2}
F(ES(T''(S'')))\le F(S).
\end{equation}
\end{lem}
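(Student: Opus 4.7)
The plan is to exhibit, in each part, a specific e-sequence $\hat S$ on the output spanning tree whose recovery (resp.\ connection) times are componentwise dominated by those of $S$; since $F$ is nondecreasing and $ES(T)$ is the optimal e-sequence on any tree $T$, this gives $F(ES(T))\le F(\hat S)\le F(S)$. In both parts $\hat S$ will list the edges in the order that Procedure A-IT or A-ET generates them (iteration by iteration, and within an iteration from the attachment point outward along the newly added path). The core of the argument is a telescoping bound that controls the total length of the partial tree after $q$ iterations by the time at which $S$ recovers or connects the $q$-th vertex/pair actually selected by the procedure.

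For part (a), let $v_{j_1},v_{j_2},\ldots$ be the vertices selected by A-IT and $\pi_q$ the length of the path added at iteration $q$. I would prove by induction on $q$ that $L(T'_q)=\sum_{q'\le q}\pi_{q'}\le R_S(v_{j_q})$. The inductive step reduces to the step inequality $\pi_q\le R_S(v_{j_q})-R_S(v_{j_{q-1}})$: in $S$, the subtree $T^S_{j_q}$ present at time $R_S(v_{j_q})$ extends $T^S_{j_{q-1}}$ by edges of total length $R_S(v_{j_q})-R_S(v_{j_{q-1}})$ and contains a path from $v_{j_q}$ to $T^S_{j_{q-1}}$ whose vertex set is contained in $V'_{q-1}$, giving an upper bound on the shortest distance used by A-IT. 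Once the telescoping bound holds, any vertex $v_k$ first spanned by A-IT in iteration $q_*$ satisfies $j_{q_*}\le k$, so $R_{\hat S}(v_k)\le L(T'_{q_*})\le R_S(v_{j_{q_*}})\le R_S(v_k)$. This settles ITVR at once; for ITPC it also suffices, since in the internal-transportation setting the constructed subgraph is always connected, so $C(u,v)=\max\{R(u),R(v)\}$, and componentwise recovery-time domination transfers directly to connection times.

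For part (b) I would run the same template with A-ET, letting $\{u_{i_1},v_{i_1}\},\{u_{i_2},v_{i_2}\},\ldots$ be the pairs actually processed and $\pi_q$ the length of the shortest $G''$-path that A-ET adds at iteration $q$. The target is $L(T''_q)\le C_S(u_{i_q},v_{i_q})$, reducible inductively to the step inequality $\pi_q\le C_S(u_{i_q},v_{i_q})-C_S(u_{i_{q-1}},v_{i_{q-1}})$. The hard part is that A-ET's forest $T''_{q-1}$ and $S$'s forest $F^S_{q-1}$ at time $C_S(u_{i_{q-1}},v_{i_{q-1}})$ are generally different subgraphs of $G$, so the clean subset inclusion used in part (a) is unavailable. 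The decisive observation is that, because A-ET always picks the first still-unconnected pair, $T''_{q-1}$ connects every pair $\{u_i,v_i\}$ with $i\le i_q-1$, which in turn contains every pair already connected by $F^S_{q-1}$; consequently, two vertices lying in the same component of $F^S_{q-1}$ also lie in the same component of $T''_{q-1}$. I would then take the $u_{i_q}$-to-$v_{i_q}$ path $P$ in $F^S_q$, decompose it into maximal subpaths lying inside single $F^S_{q-1}$-components separated by edges added during $(C_S(u_{i_{q-1}},v_{i_{q-1}}),C_S(u_{i_q},v_{i_q})]$, and replace each old subpath by a zero-cost shortcut inside the surrounding $T''_{q-1}$-component when projecting to $G''$. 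The resulting walk in $G''$ has length at most the total length of the new edges on $P$, and hence at most $C_S(u_{i_q},v_{i_q})-C_S(u_{i_{q-1}},v_{i_{q-1}})$, which is the step inequality. The induction then yields $C_{\hat S}(u,v)\le C_S(u,v)$ for every vertex pair. A minor technicality to handle carefully is consistent tie-breaking between the definition of $S''$ and A-ET's pair-selection rule, so that the pair-connectivity step goes through even when several pairs share a connection time.
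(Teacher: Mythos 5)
Your argument is correct and follows essentially the same route as the paper's proof: both rest on an induction coupling a bound on the total length of the procedure's partial tree/forest with containment of the components of $S$'s partial forest in those of the procedure's forest, your telescoping step inequality being just the incremental form of the paper's invariant $L(E'_k)\le L(E_k)$ (resp.\ $L(T''_k)\le L(E_k)$), and your path-projection/shortcut argument being the natural way to establish the inductive step that the paper leaves implicit. The one spot where your write-up is thinner than it should be is the claim in part (b) that every pair connected by $F^S_{q-1}$ occupies a position $\le i_q-1$ in $S''$ — for pairs in the p-group of $i_{q-1}$ that sit after position $i_{q-1}$ this requires a short transitivity argument through earlier-positioned pairs rather than following directly from the first-unconnected-pair selection rule — but this is easily supplied, and the paper's own two-line ``using induction we observe'' proof leaves far more unsaid.
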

\begin{proof} a) Let $V'_k$, $E'_k$ be the sets $V'$, $E'$ from the description of Procedure A-IT after $k$-th iteration of Procedure A-IT applied to $S'$ (where $k$ is not larger than the number of iterations of Procedure A-IT), and let $v_k$ be the vertex $v$ selected at the $k$-th iteration of Procedure A-IT. Let $E_k$ be the set of edges that are constructed in the e-sequence $S$ before $v_k$ is recovered, and let $V_k$ be the set of already recovered vertices of $G$ after constructing the edges of $E_k$. In other words, $V_k$ is the set of vertices in the initial subsequence of the v-sequence $S'$ that ends with $v_k$. Using induction in $k$, we observe that for any $k$, we have $L(E'_k)\le L(E_k)$ and $V_k\subseteq V'_k$. Statement a) follows immediately.

b) Let $T''_k$ be the forest $T''$ from the description of Procedure A-ET after $k$-th iteration of Procedure A-ET applied to $S''$ (where $k$ is not larger than the number of iterations of Procedure A-ET), and let $\{u_k, v_k\}$ be the vertex pair $\{u,v\}$ selected at the $k$-th iteration of Procedure A-ET. Let $E_k$ be the set of edges that are constructed in the e-sequence $S$ before the vertex pair $\{u_k, v_k\}$  is connected.  For any vertex $v\in V$, let $V''_k(v)$ be the set of vertices of the connected component of $T''_k$ that contains $v$, and let $V_k(v)$ be the set of vertices of the connected component of the forest $(V, E_k)$ that contains $v$. Using induction in $k$, we observe that $L(T''_k)\le L(E_k)$, and  for any $v\in V$ we have  $V_k(v)\subseteq V''_k(v)$. Statement b) follows immediately.
\end{proof}
The inequalities in (\ref{Eq1}) and (\ref{Eq2}) may be strict even if $S$ is the optimal e-sequence $ES(T)$ for some spanning tree $T$, because a shortest path between two connected components of a forest may be shorter than the direct edge between two specific vertices from them, and/or may include other vertices.

Lemma \ref{Lemma1} motivates the following \emph{solution improvement procedure}. 

\textbf{Procedure IMPR}. \emph{Input:} A solution $S_0$, which is the optimal e-sequence for its spanning tree of essential edges $T$,  i.e. $S_0=ES(T)$.\\
\emph{Output:} An e-sequence $S_{\mbox{impr}}$.\\
1. Obtain the vertex recovery sequence $S'$ for $ES(T)$ if Problem NC is an IT problem (respectively, the pairs connection sequence $S''$ for $ES(T)$ if Problem NC is an ET problem).\\
2. Apply Procedure A-IT to $S'$ (respectively, Procedure A-ET to $S''$), obtaining a spanning tree $T'(S')$ (respectively, $T''(S''))$).\\
3. Obtain $ES(T'(S'))$ (respectively, $ES(T''(S''))$).\\
4. If $F(ES(T'(S')))<F(ES(T))$ (respectively, if $F(ES(T''(S'')))<F(ES(T))$), re-set $T=T'(S')$ (respectively, $T=T''(S'')$) and GO TO 1;\\
Otherwise, STOP and output $S_{\mbox{impr}}=ES(T)$.

\begin{thm}\label{impr}
  The solution $S_{\mbox{impr}}$ produced by Procedure IMPR is not worse than the starting solution $S_0$. If Procedure IMPR has more than one iteration, then $S_{\mbox{impr}}$ will be strictly better than $S_0$. 
\end{thm}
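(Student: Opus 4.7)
The plan is to treat Procedure IMPR as generating a finite sequence of spanning trees $T_0, T_1, T_2, \ldots$, where $T_0 = T$ is the initial spanning tree with $ES(T_0) = S_0$, and each $T_{k+1}$ is obtained from $T_k$ by one execution of Steps 1--4 that triggers the update in Step 4. The whole argument then reduces to a monotonicity claim on the sequence of objective values $F(ES(T_k))$, for which Lemma \ref{Lemma1} supplies exactly the right one-step guarantee.

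For the first part of the theorem, I would fix an arbitrary iteration $k$ and let $T'$ denote the spanning tree produced in Step 2 (which is $T'(S')$ in the IT case and $T''(S'')$ in the ET case). Lemma \ref{Lemma1} yields $F(ES(T')) \le F(ES(T_k))$. Step 4 now splits into two cases: either the strict inequality $F(ES(T')) < F(ES(T_k))$ holds, in which case $T_{k+1} = T'$ and the procedure iterates, or the inequality is tight and the procedure terminates, outputting $S_{\mathrm{impr}} = ES(T_k)$. In either case $F(ES(T_{k+1})) \le F(ES(T_k))$, and chaining these inequalities gives $F(S_{\mathrm{impr}}) \le F(ES(T_0)) = F(S_0)$, which is the non-worse claim. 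Termination is also free: each non-terminating iteration produces a strictly smaller value of $F(ES(T))$, and since $G$ has only finitely many spanning trees there are only finitely many such values, so the process must halt.

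For the second part, assume that Procedure IMPR performs more than one iteration. Then iteration $0$ did not terminate, which by the branching rule in Step 4 forces $F(ES(T_1)) < F(ES(T_0)) = F(S_0)$. Combining this strict inequality with the non-increasing behaviour established in the first part yields $F(S_{\mathrm{impr}}) \le F(ES(T_1)) < F(S_0)$, as required.

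I do not anticipate a substantive technical obstacle here: Lemma \ref{Lemma1} already contains the one-step improvement guarantee, and the role of the proof is simply to propagate it along the iterations and to note that Step 4 converts the weak inequality of Lemma \ref{Lemma1} into a strictly decreasing sequence whenever iteration continues. The only delicate point is bookkeeping around the termination condition: ``more than one iteration'' should be read as ``Step 4 triggers at least one GO TO 1'', and in the degenerate case where no update ever occurs the output $S_{\mathrm{impr}} = ES(T)$ coincides literally with $S_0$, so the non-worse claim holds with equality.
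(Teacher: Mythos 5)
Your proposal is correct and follows exactly the route the paper intends: the paper's own proof is a one-line remark that the theorem "follows from Lemma \ref{Lemma1} and the description of Procedure IMPR," and your write-up simply fills in the bookkeeping of chaining the one-step inequality from Lemma \ref{Lemma1} across iterations, noting that Step 4 enforces strict decrease whenever the procedure continues. Nothing further is needed.
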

\begin{proof} The statement follows from Lemma \ref{Lemma1} and the description of Procedure IMPR.
\end{proof}
Observe that the complexity of one iteration of Procedure IMPR is $O(n^2+\tau(n))$ for IT problems, where $\tau(n)$ is the complexity of solving Problem NC on a tree, and $O(n^3+\tau(n))$ for ET problems.  For the tree-efficient IT problems considered in \cite{averbakhIIE, averbakhEJOR}, $\tau(n)=O(n\log n)$;  for the tree-efficient ET problems from \cite{averbakhJOC}, $\tau(n)=O(n\log n)$ with $O(n^3)$ pre-processing; hence, for these problems (which will be used for computational experiments), an iteration of Procedure IMPR takes $O(n^2)$ time for the IT problems from \cite{averbakhIIE, averbakhEJOR} and $O(n^3)$ time for the ET problems from \cite{averbakhJOC}.



\subsection{Generic local search}

Now we describe a local search procedure which incorporates Procedure IMPR, and the generic local-search based heuristic scheme  that was used in experiments.

\textbf{Procedure LOC}. \emph{Input:} A solution $A$, and the type of neighborhood to be used (chosen in advance).\\
1. Obtain the neighbors of $A$ one by one, and compute the corresponding objective values. As soon as an improving neighbor $A'$ is found (i.e., $F(A')<F(A)$), apply Procedure IMPR to $A'$, obtaining a solution $A''$; then, re-set $A=A''$, and return to 1.\\
2. If there is no improving neighbor, STOP and output $A$ as a local minimum.

\textbf{Heuristic MST-LOC}. Apply Heuristic MST, then apply Procedure LOC to the obtained solution.

The version of Procedure LOC that uses network-based (i.e., edge-exchange) neighborhoods will be referred to as \textbf{Procedure LOC-NET}; the version that uses scheduling-based neighborhoods (i.e., vertex-shift neighborhoods for IT problems and pair-shift neighborhoods for ET problems) will be referred to as \textbf{Procedure LOC-SCH}. Similarly, we define two versions of Heuristic MST-LOC, \textbf{Heuristic MST-LOC-NET} and \textbf{Heuristic MST-LOC-SCH}.

\section{Metaheuristics}\label{metaheuristics}

Two single-solution, local search based metaheuristics are considered, Tabu Search \cite{glover} and Iterated Local Search \cite{lourenco}. These metaheuristics operate with an \emph{incumbent} solution whose neighborhood  will be explored next, and finish when a time limit is reached reporting the \emph{best-so-far} solution. Each metaheuristic starts with running Heuristic MST-LOC (heuristic phase), and its local optimum solution becomes the initial solution for the metaheuristic phase. The metaheuristic phase and the heuristic phase use the same type of neighborhoods (network-based or scheduling-based). The spanning tree of the edges of the incumbent is called the \emph{incumbent spanning tree}.

\subsection{Tabu search} 
In TS, a tabu list is maintained that defines the local moves that are prohibited (tabu) unless they improve the best-so-far solution. A neighbor obtained by a local move prohibited by the tabu list is called a \emph{tabu neighbor}. Initially, the tabu list is empty, and the solution obtained by the heuristic phase is the incumbent and the best-so-far. At each iteration of TS, the whole neighborhood of the incumbent is explored (unlike the heuristic phase where a neighborhood is explored only until an improving solution is found), and the neighbor and the non-tabu neighbor with the best objective values are recorded. If the recorded neighbor has a better objective value than the best-so-far, the solution improving procedure (Procedure IMPR) is applied to this neighbor, the obtained solution becomes the new incumbent and the new best-so-far, and the tabu list is emptied. Otherwise, the best non-tabu neighbor becomes the new incumbent (even if it has a worse objective value than the current incumbent).

 When the incumbent changes without improving the best-so-far solution, each item characterizing the change is added to the tabu list for a number of iterations (tabu tenure) that is drawn randomly (for each item inclusion) from a uniform distribution between the \emph{minimum and maximum tabu tenure limits} which are the parameters of implementation. For edge-exchange neighborhoods, the items characterizing the change are the two edges that got exchanged, so the tabu list at any iteration is a set of edges; any edge exchange that involves an edge from the current tabu list is tabu. For vertex-shift neighborhoods, the item characterizing the change is the vertex moved ahead in the vertex recovery sequence, so the tabu list at every iteration is a set of vertices; any vertex-shift move initialized by a vertex in the current tabu list is tabu. For pair-shift neighborhoods, the items characterizing the change are the two vertices from the vertex pair that was moved ahead in the pairs connection sequence, so the tabu list at any iteration is a set of vertices; any pair-shift move initialized by a vertex pair that includes a vertex from the current tabu list is tabu. 

The choice of implementation parameters (the minimum and maximum tabu tenure limits) will be discussed later. TS with network-based (respectively, scheduling-based) neighborhoods will be referred to as \textbf{TS-NET} (respectively, \textbf{TS-SCH}).

\subsection{Iterated Local Search}
At each iteration of ILS, Procedure LOC is applied to the current incumbent. When a local optimum is reached, a \emph{shake operation} is performed, which represents randomly perturbing the incumbent solution; the degree of perturbation is defined  (probabilistically or deterministically) by a \emph{shake control parameter} $p$ which is a real number between 0 and 1 and is the parameter of implementation. Initially, the solution obtained by the heuristic phase is the incumbent and the best-so-far.
    
For problems with edge-exchange neighborhoods, the shake operation works as follows. Each edge of the incumbent is removed from the incumbent spanning tree independently with probability $p$. The remaining edges of the incumbent spanning tree define a forest. This forest is restored to a spanning tree by consecutively adding randomly chosen edges of $G$ that do not create cycles. The new spanning tree, after obtaining an optimal order of constructing its edges using tree-efficiency of Problem NC, defines the new incumbent. 

For problems with vertex-shift neighborhoods, the shake operation applies $\lceil p(n-1)\rceil$ random vertex-shift moves to the vertex recovery sequence of the incumbent. Then, Procedure A-IT is applied to the resulting v-sequence, and the obtained spanning tree, after computing an optimal order of constructing its edges using tree-efficiency of Problem NC, defines the new incumbent.

For problems with pair-shift neighborhoods, the shake operation applies $\lceil pq\rceil$ random pair-shift moves to the pairs connection sequence of the incumbent, where $q$ is the number of relevant vertex pairs (defined at the end of Subsection \ref{neighborhoods}). Only relevant vertex pairs can be considered for the pair-shift moves. After each pair-shift move, solution structure is restored using Procedure A-ET and tree-efficiency. The resulting solution is the new incumbent.

ILS with network-based (respectively, scheduling-based) neighborhoods will be referred to as \textbf{ILS-NET} (respectively, \textbf{ILS-SCH}).

\section{Computational experiments}\label{experiments}

\subsection{Specific NC problems}

For computational experiments, we consider the tree-efficient NC problems studied in \cite{averbakhIIE, averbakhEJOR, averbakhJOC}.

\textbf{Problem SWRT} (considered in \cite{averbakhIIE} and called Problem FNCP-W there; SWRT stands for ``sum of weighted recovery times"). This is the ITVR problem with the objective of minimizing the total weighted recovery time of all vertices $\sum_{v\in V\setminus\{r\}}w_vR(v)$, assuming that each vertex $v\ne r$ has an associated non-negative weight $w_v$. We will also consider the unweighted special case where all weights are equal to 1, which will be called \textbf{Problem USRT} (Problem FNCP in \cite{averbakhIIE}).

\textbf{Problem L} (considered in \cite{averbakhEJOR} and also called Problem L there). This is the ITVR problem with the objective of minimizing the maximum lateness of the vertices $\max_{v\in V\setminus \{r\}}(R(v)-d_v)$, assuming that each vertex $v\ne r$ has an associated due date $d_v$. Value $R(v)-d_v$ is called the \emph{lateness of vertex $v$}.

\textbf{Problem L-ETPC} (considered in \cite{averbakhJOC} and called Problem NCPC-L there). This is the ETPC problem where the objective is minimizing the maximum lateness of vertex pairs $\max_{\{u,v\}\in V^2,~v\ne u}(C(u,v)-d_{\{u,v\}})$, assuming that each vertex pair $\{u,v\}$ has an associated due date $d_{\{u,v\}}$. Value $C(u,v)-d_{\{u,v\}}$ is called the \emph{lateness of vertex pair $\{u, v\}$}.

All these problems are tree-efficient  \cite{averbakhIIE, averbakhEJOR, averbakhJOC}. Problems SWRT, USRT, and L on a tree can be solved in $O(n\log n)$ time  \cite{averbakhIIE, averbakhEJOR}, Problem L-ETPC on a tree can be solved in $O(n\log n+qn)$ time, where $q$ is the number of relevant vertex pairs (those that have finite due dates)  \cite{averbakhJOC}.

\subsection{Instances and implementation details}
For each problem, we used instances from the respective paper \cite{averbakhIIE, averbakhEJOR, averbakhJOC}. For full descriptions of the instance generating procedures, we refer the reader to \cite{averbakhIIE, averbakhEJOR, averbakhJOC}; here, we just briefly outline the nature of these procedures.

For Problems SWRT and USRT, the instances are complete networks that are divided into two groups, Euclidean instances and Random instances. For Euclidean instances, the vertices are randomly chosen in a square area of Euclidean plane, and the edge lengths are the rounded Euclidean distances between the endpoints. For Random instances, edge lengths are drawn randomly initially and then replaced with shortest-path distances for the resulting network (metric closure). For both groups, the depot location and vertex weights for weighted instances are generated randomly. (See \cite{averbakhIIE}.)

For Problem L, instances are sparse planar networks generated randomly to resemble road networks; each network has approximately 1.75$n$ edges, with vertices selected randomly from an Euclidean grid, and the lengths of the present edges corresponding to the Euclidean distances between the endpoints. The depot and the due dates for vertices are generated randomly. (See \cite{averbakhEJOR}.)

For Problem L-ETPC, instance networks are generated using the same procedure as for Problem L, and the due dates for vertex pairs are generated randomly ensuring that the number of relevant vertex pairs (those that have finite due dates) is 6$n$ (other vertex pairs are assumed to have due date $+\infty$). (See \cite{averbakhJOC}.)

To select values for the parameters of the metaheuristics (the shake control parameter for ILS and the maximum / minimum tabu tenure limits for TS), the Irace method \cite{IRACE} is used. Irace can be viewed as evolutionary optimization where individuals are parameter settings of a metaheuristic, and fitness of an individual is measured based on the performance of the metaheuristic on some instances of the problem. Different settings (individuals) are compared using iterated Friedman tests. A separate Irace run is made for each combination of a metaheuristic (TS or ILS), neighborhood type (network-based or scheduling-based), and objective function (4 of them), for a total of 16 combinations. The pool of instances used for running Irace for each combination is the set of largest instances used in the corresponding original paper (\cite{averbakhIIE}, \cite{averbakhEJOR}, or \cite{averbakhJOC}) for the corresponding objective. Specifically, these are Euclidean instances with $n=80$ and Random instances with $n=90$ from \cite{averbakhIIE} for Problem USRT, Euclidean instances with $n=45$ and Random instances with $n=40$ from \cite{averbakhIIE} for Problem SWRT, instances with $n=60$ from \cite{averbakhEJOR} for Problem L, and instances with $n=100$ for Problem L-ETPC. Table \ref{irace} provides the parameter values recommended by Irace and used in the experiments. ``ILS shake" corresponds to the ILS shake control parameter, ``TS tabu tenure" corresponds to minimum / maximum tabu tenure limits (the minimum limit is reported first, and the maximum limit is reported second), columns that correspond to versions with network-based or scheduling-based neighborhoods are identified respectively.

\begin{table}[htbp]
\begin{center}
\begin{tabular}{lcccc}
\hline
 & \multicolumn{2}{c}{ILS shake} & \multicolumn{2}{c}{TS tabu tenure}  \\
 & ILS-NET & ILS-SCH & TS-NET & TS-SCH \\
\hline 
USRT        & 0.11   & 0.36    & [5,17]  & [4,12] \\
SWRT   & 0.24  & 0.35  & [6,16] & [5,11] \\
L       & 0.23  & 0.46 & [7,14] & [7,17] \\
L-ETPC        & 0.03    & 0.14  & [7,14] & [5,17] \\
\hline
\end{tabular}
\caption{Values of control parameters recommended by Irace and used in the experiments.}
\label{irace}
\end{center}
\end{table}

The algorithms were coded in ANSI C++ and compiled with the GNU GCC compiler, version 8.3.1. The tests were run on a cluster consisting of 16 computers each having 32 Intel Xeon E5-2670 2.6GHz processors and 127 GB RAM running Linux. The code does not use any form of explicit parallelism and each processor is used to simultaneously solve a different instance (hence, we simultaneously solve 32 instances per computer). 

\subsection{Computational results}

Tables \ref{t_Completion} - \ref{t_pairwise} present the computational results for Problems USRT, SWRT, L, and L-ETPC, respectively. Column ``heuristic in paper" corresponds to the best-performing heuristic approach from the corresponding paper  \cite{averbakhIIE, averbakhEJOR, averbakhJOC}; ``exact" corresponds to the exact method (branch-and-bound) from the corresponding paper. The running time for the metaheuristics was limited to 600 seconds. For the exact methods, the results from the corresponding papers \cite{averbakhIIE, averbakhEJOR, averbakhJOC} were used; the heuristics from  \cite{averbakhIIE, averbakhEJOR, averbakhJOC} take very little time.  In Tables \ref{t_Completion} and \ref{t_wCompletion}, the instances are grouped by a combination of instance type (Euclidean or Random) and size $|V|$, 10 instances in each group. In Tables \ref{t_lateness} and \ref{t_pairwise}, the instances are grouped by the size $|V|$, 80 and 100 instances in each group, respectively. For each group - method combination, we report the number of instances from the group ``\#" for which the corresponding method found the best known solution (out of those found by all 6 methods), the average gap ``av." and the maximum gap ``max" for the solutions obtained by the method for the instances in the group. The gap for an instance - method combination is defined according to (\ref{gap1}) for Problems USRT and SWRT, and according to (\ref{gap2}) for Problems L and L-ETPC, where $UB$ is the objective value of the solution provided by the method for the instance, $best$ is the objective value of the best-known solution, and $d_{min}$ is the smallest due date in the instance (justification for including $d_{min}$ in (\ref{gap2}) was discussed in \cite{averbakhEJOR, averbakhJOC}).
\begin{equation}\label{gap1}
gap=100\times\frac{UB-best}{UB},
\end{equation}
\begin{equation}\label{gap2}
gap=100\times\frac{UB-best}{UB+d_{min}}.
\end{equation}
The results show the following:
\begin{itemize}
  \item Network-based neighborhoods are distinctly more effective for metaheuristics than scheduling-based neighborhoods.
  \item ILS is somewhat more effective than TS. It seems that TS is more prone to getting stuck in the neighborhood of a local optimum.
  \item The metaheuristics with edge-exchange neighborhoods are extremely effective. For almost all instances that were solved to optimality by exact methods in \cite{averbakhIIE, averbakhEJOR, averbakhJOC}, the  metaheuristics with edge-exchange neighborhoods found optimal solutions. ILS-NET found the best-known solutions for all instances except one (that corresponds to Problem L with $|V|=40$); a re-run of ILS-NET for this instance found the best-known solution. TS-NET found the best known solution for all instances except 16 (out of 2300); a re-run of TS-NET for these  instances with modified tabu tenure limits (the initial ones multiplied by 1.5) found the best-known solutions for all 16 instances. (However, the TS with the modified tabu tenure limits performed worse on several other instances, so the total number of instances where the best known solution was not found did not change significantly.)
  \item The metaheuristics with edge-exchange neighborhoods provide significant improvements over the heuristic methods used in \cite{averbakhIIE, averbakhEJOR, averbakhJOC}.
\end{itemize}

Hence, ILS-NET is the recommended metaheuristic for tree-efficient NC problems.

Tables \ref{overall-comparison-part1} and \ref{overall-comparison-part2} reflect a comparison between network-based and scheduling-based neighborhoods in the context of the local search heuristic without meta-heuristic components; that is, a comparison between Heuristics MST-LOC-NET and MST-LOC-SCH. In Table \ref{overall-comparison-part1}, each instance group is defined by a combination of problem (USRT or SWRT), instance type (Euclidean or Random), and instance size $|V|$; in Table \ref{overall-comparison-part2}, each instance group is defined by a combination of problem (L or L-ETPC) and instance size $|V|$. The column ``\# instances" reflects the number of instances in each group; the column ``\# best network" reports the number of instances in the group where Heuristic MST-LOC-NET obtained a strictly better solution than Heuristic MST-LOC-SCH; the column ``\# best scheduling" reports the number of instances in the group where Heuristic MST-LOC-SCH obtained a strictly better solution than Heuristic MST-LOC-NET. The results show that network-based neighborhoods are distinctly more effective than scheduling-based neighborhoods not only for metaheuristics but also for the local search without the metaheuristic phase, and the difference increases with the instance size (perhaps because for larger instances it becomes less likely to find the same solution). Hence, \emph{Heuristic MST-LOC-NET is the recommended heuristic approach} (if metaheuristics are not considered) for tree-efficient NC problems. 

Tables \ref{completion-comparison}, \ref{wcompletion-comparison}, \ref{lateness-comparison}, and \ref{pairwise-comparison} provide the results of comparison between Heuristic MST-LOC-NET and the best heuristic from the corresponding paper \cite{averbakhIIE, averbakhEJOR, averbakhJOC} for Problems USRT, SWRT, L, and L-ETPC. Columns ``\# better" reflect the number of instances where one heuristic found strictly better solutions than the other; also, the average and maximum values of the relevant gap ((\ref{gap1})or (\ref{gap2})) with respect to the best known solution are reported. The results demonstrate that Heuristic MST-LOC-NET performs distinctly better than the heuristics from \cite{averbakhIIE, averbakhEJOR} for Problems USRT, SWRT, and L. For Problem L-ETPC, Heuristic MST-LOC-NET performs somewhat better than the heuristic from \cite{averbakhJOC}, but the difference in gaps is very minor; this is expected because Heuristic MST-LOC-NET is quite similar to the heuristic from \cite{averbakhJOC} (the difference is the solution improvement procedure which is not used in \cite{averbakhJOC}). Tables \ref{completion-comparison} and \ref{wcompletion-comparison} also show that Heuristic MST-LOC-NET finds excellent solutions for Problems USRT and SWRT, so adding the metaheuristic phase for these problems has less importance than for Problems L and L-ETPC.

In additional computational experiments (not reported in the tables), we explored effectiveness of using the solution obtained by Heuristic MST as a starting solution for local search in metaheuristics (as described above) versus using a solution obtained from a randomly generated spanning tree, and effectiveness of using the solution improvement procedure (Procedure IMPR) versus not using it. The results showed that using both features improves the performance of the metaheuristics for all problems. Also, additional experiments indicated that the metaheuristics are robust with respect to moderate changes in the implementation parameters, so a possible practical approach could be to vary the implementation parameters in the course of the computation.

When the paper was in preparation, we learned about the recent work \cite{hermans}. In this paper, for a problem equivalent to Problem SWRT, an exact branch-and-cut algorithm and a polynomial approximation algorithm with a constant approximation ratio are developed. In addition, the authors consider a local search heuristic 
that uses edge-exchange neighborhoods, and observe that for the considered problem it has better performance than the local search heuristic from \cite{averbakhIIE} which uses scheduling-based neighborhoods. This is consistent with our conclusions that network-based neighborhoods are more effective than scheduling-based neighborhoods for tree-efficient NC problems.

\section{Conclusions}\label{conclusions}

In this paper, we proposed a generic heuristic approach for tree-efficient network construction problems based on local search and network-based (edge-exchange) neighborhoods (Heuristic MST-LOC-NET), which is a generalized and somewhat enhanced version of the local search heuristic used in \cite{averbakhJOC} for a specific network construction problem (Problem L-ETPC), and two metaheuristics (ILS-NET and TS-NET) which are the first metaheuristics for NC problems. The heuristic and metaheuristics are universal, in the sense that they are applicable to different types (external transportation, internal transportation) of NC problems with different objective functions, and use only the tree-efficiency property. Computational experiments were conducted for the specific tree-efficient NC problems studied in \cite{averbakhIIE, averbakhEJOR, averbakhJOC}. The experiments demonstrated that network-based neighborhoods are more effective than scheduling-based neighborhoods (which were used for local search heuristics in \cite{averbakhIIE, averbakhEJOR}) both in the context of heuristics and metaheuristics, and that metaheuristics ILS-NET and TS-NET are extremely effective. Specifically, for almost all instances solved to optimality by exact methods in \cite{averbakhIIE, averbakhEJOR, averbakhJOC}, the metaheuristics found optimal solutions.

There are countless possibilities to try to design different and more sophisticated versions of the metaheuristics, such as, for example, combining different types of neighborhoods (network-based and scheduling-based, single edge-exchange and double edge-exchange, etc.) in a Variable Neighborhood Search scheme, using metric closure of network $G$ instead of the network itself (for sparse networks this increases the size of network-based neighborhoods which may improve performance of local search, but would require more computational time per iteration), using multi-start approaches and different heuristics to obtain starting solutions, combining ILS and TS, etc. However, we chose simple ``basic" metaheuristic designs since, in our opinion, the main take-out of this research is the observation that for tree-efficient NC problems, metaheuristics and heuristics based on local search with network-based (edge-exchange) neighborhoods are extremely effective, even if designed generically using only tree-efficiency without exploiting specifics of particular  NC problems. We are not aware of  other network combinatorial optimization problems where tree-efficiency would be so useful and effective for heuristic / metaheuristic solution on general networks. Exploring similar ideas for other combinatorial optimization problems may be a direction for further research.

\vspace{0.3in}

\textbf{ACKNOWLEDGEMENT}. The research of Igor Averbakh was supported by the Discovery Grant RGPIN-2018-05066 from the Natural Sciences and Engineering Research Council of Canada (NSERC).

\vspace{0.3in}

\begin{sidewaystable}[htbp]
\small
\begin{center}
\begin{tabular}{lc|ccc|ccc|ccc|ccc|ccc|ccc}
 &  & \multicolumn{3}{c|}{ILS-NET} & \multicolumn{3}{c|}{ILS-SCH} & \multicolumn{3}{c|}{TS-NET} & \multicolumn{3}{c|}{TS-SCH} & \multicolumn{3}{c|}{heuristic in paper} & \multicolumn{3}{c}{exact} \\
Type & $|V|$ & \# & av. & max. & \# & av. & max. & \# & av. & max. & \# & av. & max. & \# & av. & max. & \# & av. & max. \\
\hline
euclidean &  20  &  10  &  0.0  &  0.0  &  10  &  0.0  &  0.0  &  10  &  0.0  &  0.0  &  10  &  0.0  &  0.0  &  10  &  0.0  &  0.0  &  10  &  0.0  &  0.0  \\
euclidean &  25  &  10  &  0.0  &  0.0  &  10  &  0.0  &  0.0  &  10  &  0.0  &  0.0  &  10  &  0.0  &  0.0  &  10  &  0.0  &  0.0  &  10  &  0.0  &  0.0  \\
euclidean &  30  &  10  &  0.0  &  0.0  &  10  &  0.0  &  0.0  &  10  &  0.0  &  0.0  &  10  &  0.0  &  0.0  &  10  &  0.0  &  0.0  &  10  &  0.0  &  0.0  \\
euclidean &  35  &  10  &  0.0  &  0.0  &  10  &  0.0  &  0.0  &  10  &  0.0  &  0.0  &  10  &  0.0  &  0.0  &  10  &  0.0  &  0.0  &  10  &  0.0  &  0.0  \\
euclidean &  40  &  10  &  0.0  &  0.0  &  10  &  0.0  &  0.0  &  10  &  0.0  &  0.0  &  9  &  0.0  &  0.03  &  7  &  0.2  &  1.82  &  10  &  0.0  &  0.0  \\
euclidean &  45  &  10  &  0.0  &  0.0  &  9  &  0.02  &  0.17  &  10  &  0.0  &  0.0  &  8  &  0.07  &  0.55  &  5  &  0.38  &  1.57  &  10  &  0.0  &  0.0  \\
euclidean &  50  &  10  &  0.0  &  0.0  &  9  &  0.0  &  0.05  &  10  &  0.0  &  0.0  &  7  &  0.08  &  0.45  &  5  &  0.38  &  1.51  &  10  &  0.0  &  0.0  \\
euclidean &  55  &  10  &  0.0  &  0.0  &  9  &  0.04  &  0.41  &  10  &  0.0  &  0.0  &  4  &  0.27  &  0.98  &  3  &  0.3  &  0.98  &  10  &  0.0  &  0.0  \\
euclidean &  60  &  10  &  0.0  &  0.0  &  10  &  0.0  &  0.0  &  10  &  0.0  &  0.0  &  4  &  0.5  &  1.83  &  3  &  0.7  &  2.07  &  10  &  0.0  &  0.0  \\
euclidean &  65  &  10  &  0.0  &  0.0  &  8  &  0.03  &  0.22  &  10  &  0.0  &  0.0  &  4  &  0.2  &  0.54  &  2  &  0.46  &  2.55  &  8  &  0.02  &  0.21  \\
euclidean &  70  &  10  &  0.0  &  0.0  &  7  &  0.04  &  0.18  &  10  &  0.0  &  0.0  &  4  &  0.05  &  0.18  &  1  &  0.75  &  4.08  &  6  &  0.05  &  0.19  \\
euclidean &  80  &  10  &  0.0  &  0.0  &  5  &  0.08  &  0.36  &  10  &  0.0  &  0.0  &  2  &  0.4  &  1.11  &  1  &  0.49  &  1.54  &  5  &  0.08  &  0.37  \\
\hline
random &  20  &  10  &  0.0  &  0.0  &  10  &  0.0  &  0.0  &  10  &  0.0  &  0.0  &  10  &  0.0  &  0.0  &  10  &  0.0  &  0.0  &  10  &  0.0  &  0.0  \\
random &  25  &  10  &  0.0  &  0.0  &  10  &  0.0  &  0.0  &  10  &  0.0  &  0.0  &  10  &  0.0  &  0.0  &  10  &  0.0  &  0.0  &  10  &  0.0  &  0.0  \\
random &  30  &  10  &  0.0  &  0.0  &  10  &  0.0  &  0.0  &  10  &  0.0  &  0.0  &  10  &  0.0  &  0.0  &  9  &  0.06  &  0.55  &  10  &  0.0  &  0.0  \\
random &  35  &  10  &  0.0  &  0.0  &  9  &  0.02  &  0.17  &  10  &  0.0  &  0.0  &  7  &  0.35  &  2.21  &  6  &  0.65  &  4.87  &  10  &  0.0  &  0.0  \\
random &  40  &  10  &  0.0  &  0.0  &  10  &  0.0  &  0.0  &  10  &  0.0  &  0.0  &  10  &  0.0  &  0.0  &  10  &  0.0  &  0.0  &  10  &  0.0  &  0.0  \\
random &  45  &  10  &  0.0  &  0.0  &  10  &  0.0  &  0.0  &  10  &  0.0  &  0.0  &  10  &  0.0  &  0.0  &  9  &  0.0  &  0.01  &  10  &  0.0  &  0.0  \\
random &  50  &  10  &  0.0  &  0.0  &  9  &  0.29  &  2.91  &  10  &  0.0  &  0.0  &  8  &  0.31  &  2.91  &  7  &  0.31  &  2.91  &  10  &  0.0  &  0.0  \\
random &  55  &  10  &  0.0  &  0.0  &  9  &  0.02  &  0.17  &  10  &  0.0  &  0.0  &  8  &  0.07  &  0.54  &  7  &  0.18  &  1.17  &  10  &  0.0  &  0.0  \\
random &  60  &  10  &  0.0  &  0.0  &  9  &  0.18  &  1.83  &  10  &  0.0  &  0.0  &  5  &  0.25  &  1.83  &  5  &  0.31  &  1.83  &  10  &  0.0  &  0.0  \\
random &  65  &  10  &  0.0  &  0.0  &  10  &  0.0  &  0.0  &  10  &  0.0  &  0.0  &  7  &  0.06  &  0.31  &  6  &  0.18  &  0.84  &  10  &  0.0  &  0.0  \\
random &  70  &  10  &  0.0  &  0.0  &  9  &  0.03  &  0.29  &  10  &  0.0  &  0.0  &  8  &  0.15  &  0.94  &  7  &  0.23  &  1.69  &  10  &  0.0  &  0.0  \\
random &  80  &  10  &  0.0  &  0.0  &  5  &  0.08  &  0.25  &  9  &  0.0  &  0.01  &  2  &  0.39  &  1.13  &  2  &  0.98  &  2.51  &  4  &  0.14  &  0.35  \\
random &  90  &  10  &  0.0  &  0.0  &  8  &  0.01  &  0.04  &  10  &  0.0  &  0.0  &  4  &  0.13  &  0.86  &  3  &  0.33  &  1.1  &  3  &  0.12  &  0.45  \\
\hline
\end{tabular}
\caption{Results for Problem USRT.}
\label{t_Completion}
\end{center}
\end{sidewaystable}

\begin{sidewaystable}[htbp]
\small
\begin{center}
\begin{tabular}{lc|ccc|ccc|ccc|ccc|ccc|ccc}
 &  & \multicolumn{3}{c|}{ILS-NET} & \multicolumn{3}{c|}{ILS-SCH} & \multicolumn{3}{c|}{TS-NET} & \multicolumn{3}{c|}{TS-SCH} & \multicolumn{3}{c|}{heuristic in paper} & \multicolumn{3}{c}{exact} \\
Type & $|V|$ & \# & av. & max. & \# & av. & max. & \# & av. & max. & \# & av. & max. & \# & av. & max. & \# & av. & max. \\
\hline
euclidean &  20  &  10  &  0.0  &  0.0  &  10  &  0.0  &  0.0  &  10  &  0.0  &  0.0  &  10  &  0.0  &  0.0  &  10  &  0.0  &  0.0  &  10  &  0.0  &  0.0  \\
euclidean &  25  &  10  &  0.0  &  0.0  &  10  &  0.0  &  0.0  &  10  &  0.0  &  0.0  &  9  &  0.0  &  0.02  &  7  &  0.53  &  5.29  &  10  &  0.0  &  0.0  \\
euclidean &  30  &  10  &  0.0  &  0.0  &  10  &  0.0  &  0.0  &  10  &  0.0  &  0.0  &  9  &  0.0  &  0.0  &  7  &  0.07  &  0.74  &  10  &  0.0  &  0.0  \\
euclidean &  35  &  10  &  0.0  &  0.0  &  10  &  0.0  &  0.0  &  10  &  0.0  &  0.0  &  8  &  0.03  &  0.27  &  5  &  0.25  &  1.55  &  10  &  0.0  &  0.0  \\
euclidean &  40  &  10  &  0.0  &  0.0  &  9  &  0.09  &  0.92  &  9  &  0.11  &  1.07  &  6  &  0.25  &  1.07  &  4  &  1.0  &  7.37  &  7  &  0.16  &  0.96  \\
euclidean &  45  &  10  &  0.0  &  0.0  &  9  &  0.0  &  0.04  &  10  &  0.0  &  0.0  &  6  &  0.53  &  2.72  &  5  &  1.17  &  5.59  &  6  &  0.4  &  2.65  \\
\hline
random &  20  &  10  &  0.0  &  0.0  &  10  &  0.0  &  0.0  &  10  &  0.0  &  0.0  &  10  &  0.0  &  0.0  &  10  &  0.0  &  0.0  &  10  &  0.0  &  0.0  \\
random &  25  &  10  &  0.0  &  0.0  &  10  &  0.0  &  0.0  &  10  &  0.0  &  0.0  &  10  &  0.0  &  0.0  &  10  &  0.0  &  0.0  &  10  &  0.0  &  0.0  \\
random &  30  &  10  &  0.0  &  0.0  &  10  &  0.0  &  0.0  &  10  &  0.0  &  0.0  &  9  &  0.0  &  0.0  &  8  &  0.07  &  0.74  &  10  &  0.0  &  0.0  \\
random &  35  &  10  &  0.0  &  0.0  &  10  &  0.0  &  0.0  &  10  &  0.0  &  0.0  &  8  &  0.02  &  0.2  &  7  &  0.81  &  7.88  &  10  &  0.0  &  0.0  \\
random &  40  &  10  &  0.0  &  0.0  &  10  &  0.0  &  0.0  &  10  &  0.0  &  0.0  &  10  &  0.0  &  0.0  &  9  &  0.03  &  0.29  &  9  &  0.03  &  0.29  \\
\hline
\end{tabular}
\caption{Results for Problem SWRT.}
\label{t_wCompletion}
\end{center}
\end{sidewaystable}

\begin{sidewaystable}[htbp]
\small
\begin{center}
\begin{tabular}{l|ccc|ccc|ccc|ccc|ccc|ccc}
   & \multicolumn{3}{c|}{ILS-NET} & \multicolumn{3}{c|}{ILS-SCH} & \multicolumn{3}{c|}{TS-NET} & \multicolumn{3}{c|}{TS-SCH} & \multicolumn{3}{c|}{heuristic in paper} & \multicolumn{3}{c}{exact} \\
$|V|$ & \# & av. & max. & \# & av. & max. & \# & av. & max. & \# & av. & max. & \# & av. & max. & \# & av. & max. \\
\hline
25  &  80  &  0.0  &  0.0  &  76  &  0.08  &  4.41  &  80  &  0.0  &  0.0  &  68  &  0.3  &  4.51  &  48  &  1.38  &  15.44  &  80  &  0.0  &  0.0  \\
30  &  80  &  0.0  &  0.0  &  74  &  0.04  &  1.71  &  80  &  0.0  &  0.0  &  66  &  0.35  &  5.25  &  49  &  1.4  &  21.56  &  80  &  0.0  &  0.0  \\
35  &  80  &  0.0  &  0.0  &  76  &  0.02  &  0.68  &  80  &  0.0  &  0.0  &  63  &  0.37  &  4.61  &  46  &  1.59  &  14.77  &  80  &  0.0  &  0.0  \\
40  &  79  &  0.0  &  0.12  &  70  &  0.13  &  2.91  &  80  &  0.0  &  0.0  &  61  &  0.65  &  13.37  &  39  &  1.66  &  25.87  &  80  &  0.0  &  0.0  \\
45  &  80  &  0.0  &  0.0  &  69  &  0.09  &  3.22  &  80  &  0.0  &  0.0  &  55  &  0.49  &  7.62  &  34  &  1.68  &  11.19  &  78  &  0.02  &  0.83  \\
50  &  80  &  0.0  &  0.0  &  64  &  0.12  &  1.3  &  80  &  0.0  &  0.0  &  54  &  0.46  &  6.68  &  37  &  1.64  &  12.5  &  69  &  0.36  &  7.06  \\
55  &  80  &  0.0  &  0.0  &  58  &  0.25  &  4.03  &  77  &  0.01  &  0.47  &  53  &  0.41  &  4.8  &  28  &  1.81  &  12.1  &  59  &  0.39  &  10.07  \\
60  &  80  &  0.0  &  0.0  &  61  &  0.25  &  4.3  &  78  &  0.02  &  1.36  &  46  &  0.7  &  10.99  &  30  &  1.85  &  16.24  &  46  &  1.28  &  16.24  \\
\hline
\end{tabular}
\caption{Results for Problem L.}
\label{t_lateness}
\end{center}
\end{sidewaystable}

\begin{sidewaystable}[htbp]
\small
\begin{center}
\begin{tabular}{l|ccc|ccc|ccc|ccc|ccc|ccc}
   & \multicolumn{3}{c|}{ILS-NET} & \multicolumn{3}{c|}{ILS-SCH} & \multicolumn{3}{c|}{TS-NET} & \multicolumn{3}{c|}{TS-SCH} & \multicolumn{3}{c|}{heuristic in paper} & \multicolumn{3}{c}{exact} \\
$|V|$ & \# & av. & max. & \# & av. & max. & \# & av. & max. & \# & av. & max. & \# & av. & max. & \# & av. & max. \\
\hline
20  &  100  &  0.0  &  0.0  &  96  &  0.04  &  2.55  &  100  &  0.0  &  0.0  &  47  &  0.67  &  5.17  &  92  &  0.05  &  1.48  &  100  &  0.0  &  0.0  \\
25  &  100  &  0.0  &  0.0  &  81  &  0.13  &  2.67  &  100  &  0.0  &  0.0  &  47  &  0.52  &  5.61  &  89  &  0.08  &  1.6  &  100  &  0.0  &  0.0  \\
30  &  100  &  0.0  &  0.0  &  76  &  0.17  &  2.79  &  100  &  0.0  &  0.0  &  43  &  0.55  &  3.92  &  85  &  0.14  &  2.05  &  100  &  0.0  &  0.0  \\
35  &  100  &  0.0  &  0.0  &  72  &  0.25  &  4.93  &  100  &  0.0  &  0.0  &  52  &  0.53  &  6.38  &  92  &  0.09  &  2.59  &  100  &  0.0  &  0.0  \\
40  &  100  &  0.0  &  0.0  &  68  &  0.33  &  4.37  &  100  &  0.0  &  0.0  &  45  &  0.49  &  4.37  &  85  &  0.08  &  1.64  &  100  &  0.0  &  0.0  \\
45  &  100  &  0.0  &  0.0  &  66  &  0.27  &  2.75  &  100  &  0.0  &  0.0  &  52  &  0.45  &  4.86  &  88  &  0.04  &  1.3  &  98  &  0.01  &  0.35  \\
50  &  100  &  0.0  &  0.0  &  59  &  0.32  &  2.37  &  100  &  0.0  &  0.0  &  50  &  0.38  &  2.7  &  77  &  0.12  &  1.98  &  96  &  0.01  &  0.27  \\
55  &  100  &  0.0  &  0.0  &  49  &  0.34  &  4.77  &  100  &  0.0  &  0.0  &  33  &  0.46  &  4.77  &  76  &  0.1  &  1.62  &  98  &  0.0  &  0.35  \\
60  &  100  &  0.0  &  0.0  &  54  &  0.33  &  2.35  &  99  &  0.0  &  0.02  &  45  &  0.39  &  2.58  &  81  &  0.08  &  1.92  &  98  &  0.0  &  0.06  \\
70  &  100  &  0.0  &  0.0  &  44  &  0.34  &  3.78  &  100  &  0.0  &  0.0  &  34  &  0.37  &  3.78  &  78  &  0.05  &  0.97  &  89  &  0.02  &  0.77  \\
80  &  100  &  0.0  &  0.0  &  44  &  0.26  &  1.35  &  100  &  0.0  &  0.0  &  42  &  0.26  &  1.35  &  83  &  0.09  &  1.93  &  88  &  0.04  &  1.06  \\
90  &  100  &  0.0  &  0.0  &  38  &  0.39  &  2.53  &  93  &  0.02  &  0.61  &  33  &  0.41  &  3.56  &  70  &  0.19  &  1.97  &  83  &  0.11  &  1.97  \\
100  &  100  &  0.0  &  0.0  &  40  &  0.37  &  2.51  &  98  &  0.01  &  0.83  &  35  &  0.39  &  2.51  &  72  &  0.09  &  1.61  &  82  &  0.05  &  0.83  \\
\hline
\end{tabular}
\caption{Results for Problem L-ETPC.}
\label{t_pairwise}
\end{center}
\end{sidewaystable}

\begin{table}[htbp]
\small
\begin{center}
\begin{tabular}{lcccc}
\hline
type & $|V|$ & \# instances & \# best network & \# best scheduling \\
\hline
USRT (Euclidean) &  20  &  10  &  3  &  0 \\
USRT (Euclidean) &  25  &  10  &  2  &  0 \\
USRT (Euclidean) &  30  &  10  &  2  &  0 \\
USRT (Euclidean) &  35  &  10  &  0  &  0 \\
USRT (Euclidean) &  40  &  10  &  3  &  0 \\
USRT (Euclidean) &  45  &  10  &  5  &  0 \\
USRT (Euclidean) &  50  &  10  &  5  &  0 \\
USRT (Euclidean) &  55  &  10  &  6  &  0 \\
USRT (Euclidean) &  60  &  10  &  6  &  0 \\
USRT (Euclidean) &  65  &  10  &  7  &  1 \\
USRT (Euclidean) &  70  &  10  &  6  &  0 \\
USRT (Euclidean) &  80  &  10  &  8  &  0 \\
\hline
USRT (Random) &  20  &  10  &  0  &  0 \\
USRT (Random) &  25  &  10  &  0  &  0 \\
USRT (Random) &  30  &  10  &  1  &  0 \\
USRT (Random) &  35  &  10  &  3  &  0 \\
USRT (Random) &  40  &  10  &  1  &  0 \\
USRT (Random) &  45  &  10  &  0  &  0 \\
USRT (Random) &  50  &  10  &  3  &  0 \\
USRT (Random) &  55  &  10  &  2  &  0 \\
USRT (Random) &  60  &  10  &  5  &  0 \\
USRT (Random) &  65  &  10  &  3  &  0 \\
USRT (Random) &  70  &  10  &  3  &  0 \\
USRT (Random) &  80  &  10  &  6  &  1 \\
USRT (Random) &  90  &  10  &  7  &  0 \\
\hline
SWRT (Euclidean) &  20  &  10  &  1  &  0 \\
SWRT (Euclidean) &  25  &  10  &  4  &  0 \\
SWRT (Euclidean) &  30  &  10  &  4  &  0 \\
SWRT (Euclidean) &  35  &  10  &  4  &  0 \\
SWRT (Euclidean) &  40  &  10  &  2  &  1 \\
SWRT (Euclidean) &  45  &  10  &  6  &  0 \\
\hline
SWRT (Random) &  20  &  10  &  0  &  0 \\
SWRT (Random) &  25  &  10  &  3  &  0 \\
SWRT (Random) &  30  &  10  &  3  &  0 \\
SWRT (Random) &  35  &  10  &  5  &  0 \\
SWRT (Random) &  40  &  10  &  1  &  0 \\
\end{tabular}
\caption{Comparison between network and scheduling neighborhoods for Heuristic MST-LOC for Problems USRT and SWRT.} 
\label{overall-comparison-part1}
\end{center}
\end{table}

\begin{table}[htbp]
\small
\begin{center}
\begin{tabular}{lcccc}
\hline
type & $|V|$ & \# instances & \# best network & \# best scheduling \\
\hline
L &  25  &  80  &  21  &  5 \\
L &  30  &  80  &  27  &  3 \\
L &  35  &  80  &  26  &  3 \\
L &  40  &  80  &  25  &  9 \\
L &  45  &  80  &  35  &  5 \\
L &  50  &  80  &  31  &  8 \\
L &  55  &  80  &  35  &  7 \\
L &  60  &  80  &  36  &  7 \\
\hline
L-ETPC &  20  &  100  &  17  &  0 \\
L-ETPC &  25  &  100  &  19  &  0 \\
L-ETPC &  30  &  100  &  25  &  4 \\
L-ETPC &  35  &  100  &  26  &  2 \\
L-ETPC &  40  &  100  &  30  &  0 \\
L-ETPC &  45  &  100  &  29  &  3 \\
L-ETPC &  50  &  100  &  28  &  3 \\
L-ETPC &  55  &  100  &  43  &  2 \\
L-ETPC &  60  &  100  &  41  &  2 \\
L-ETPC &  70  &  100  &  44  &  3 \\
L-ETPC &  80  &  100  &  47  &  3 \\
L-ETPC &  90  &  100  &  52  &  2 \\
L-ETPC &  100  &  100  &  53  &  1 \\
\hline
\end{tabular}
\caption{Comparison between network and scheduling neighborhoods for Heuristic MST-LOC for Problems L and L-ETPC.} 
\label{overall-comparison-part2}
\end{center}
\end{table}

\begin{table}[htbp]
\begin{center}
\begin{tabular}{llcccccc}
\hline
 &  & \multicolumn{3}{c}{Heuristic MST-LOC-NET} & \multicolumn{3}{c}{Best heuristic in paper}  \\
type & $|V|$ & \# better & av. gap & max. gap & \# better & av. gap & max. gap \\
\hline
Euclidean &  20  &  0  &  0.0  &  0.0  &  0  &  0.0  &  0.0  \\
Euclidean &  25  &  0  &  0.0  &  0.0  &  0  &  0.0  &  0.0  \\
Euclidean &  30  &  0  &  0.0  &  0.0  &  0  &  0.0  &  0.0  \\
Euclidean &  35  &  0  &  0.0  &  0.0  &  0  &  0.0  &  0.0  \\
Euclidean &  40  &  3  &  0.0  &  0.0  &  0  &  0.2  &  1.82  \\
Euclidean &  45  &  5  &  0.0  &  0.0  &  0  &  0.38  &  1.57  \\
Euclidean &  50  &  5  &  0.0  &  0.0  &  0  &  0.38  &  1.51  \\
Euclidean &  55  &  7  &  0.0  &  0.0  &  0  &  0.3  &  0.98  \\
Euclidean &  60  &  7  &  0.02  &  0.16  &  0  &  0.7  &  2.07  \\
Euclidean &  65  &  8  &  0.0  &  0.05  &  0  &  0.46  &  2.55  \\
Euclidean &  70  &  9  &  0.02  &  0.19  &  0  &  0.75  &  4.08  \\
Euclidean &  80  &  9  &  0.0  &  0.02  &  0  &  0.49  &  1.54  \\
\hline
Random &  20  &  0  &  0.0  &  0.0  &  0  &  0.0  &  0.0  \\
Random &  25  &  0  &  0.0  &  0.0  &  0  &  0.0  &  0.0  \\
Random &  30  &  1  &  0.0  &  0.0  &  0  &  0.06  &  0.55  \\
Random &  35  &  4  &  0.0  &  0.0  &  0  &  0.65  &  4.87  \\
Random &  40  &  0  &  0.0  &  0.0  &  0  &  0.0  &  0.0  \\
Random &  45  &  1  &  0.0  &  0.0  &  0  &  0.0  &  0.01  \\
Random &  50  &  3  &  0.0  &  0.0  &  0  &  0.31  &  2.91  \\
Random &  55  &  3  &  0.0  &  0.0  &  0  &  0.18  &  1.17  \\
Random &  60  &  5  &  0.0  &  0.0  &  0  &  0.31  &  1.83  \\
Random &  65  &  4  &  0.0  &  0.0  &  0  &  0.18  &  0.84  \\
Random &  70  &  3  &  0.0  &  0.0  &  0  &  0.23  &  1.69  \\
Random &  80  &  7  &  0.15  &  1.04  &  0  &  0.98  &  2.51  \\
Random &  90  &  7  &  0.0  &  0.0  &  0  &  0.33  &  1.1  \\
\hline
\end{tabular}
\caption{Comparison between the best heuristic reported in \cite{averbakhIIE} and Heuristic MST-LOC-NET for Problem USRT. 10 instances per a type-size group.}
\label{completion-comparison}
\end{center}
\end{table}

\begin{table}[htbp]
\begin{center}
\begin{tabular}{llcccccc}
\hline
 &  & \multicolumn{3}{c}{Heuristic MST-LOC-NET} & \multicolumn{3}{c}{Best heuristic in paper}  \\
type & $|V|$ & \# better & av. gap & max. gap & \# better & av. gap & max. gap \\
\hline
Euclidean &  20  &  0  &  0.0  &  0.0  &  0  &  0.0  &  0.0  \\
Euclidean &  25  &  3  &  0.0  &  0.0  &  0  &  0.53  &  5.29  \\
Euclidean &  30  &  3  &  0.0  &  0.0  &  0  &  0.07  &  0.74  \\
Euclidean &  35  &  5  &  0.0  &  0.0  &  0  &  0.25  &  1.55  \\
Euclidean &  40  &  3  &  0.25  &  1.07  &  0  &  1.0  &  7.37  \\
Euclidean &  45  &  5  &  0.0  &  0.04  &  0  &  1.17  &  5.59  \\
\hline
Random &  20  &  0  &  0.0  &  0.0  &  0  &  0.0  &  0.0  \\
Random &  25  &  0  &  0.0  &  0.0  &  0  &  0.0  &  0.0  \\
Random &  30  &  2  &  0.0  &  0.0  &  0  &  0.07  &  0.74  \\
Random &  35  &  3  &  0.0  &  0.0  &  0  &  0.81  &  7.88  \\
Random &  40  &  1  &  0.0  &  0.0  &  0  &  0.03  &  0.29  \\
\hline
\end{tabular}
\caption{Comparison between the best heuristic reported in \cite{averbakhIIE} and Heuristic MST-LOC-NET for Problem SWRT. 10 instances per each type-size group.}
\label{wcompletion-comparison}
\end{center}
\end{table}

\begin{table}[htbp]
\begin{center}
\begin{tabular}{lcccccc}
\hline
  & \multicolumn{3}{c}{Heuristic MST-LOC-NET} & \multicolumn{3}{c}{Best heuristic in paper}  \\
 $|V|$ & \# better & av. gap & max. gap & \# better & av. gap & max. gap \\
\hline
25  &  25  &  0.29  &  3.59  &  2  &  1.38  &  15.44 \\
30  &  24  &  0.29  &  4.05  &  4  &  1.4  &  21.56 \\
35  &  31  &  0.14  &  3.38  &  1  &  1.59  &  14.77 \\
40  &  32  &  0.57  &  7.24  &  5  &  1.66  &  25.87 \\
45  &  37  &  0.31  &  3.46  &  6  &  1.68  &  11.19 \\
50  &  34  &  0.43  &  5.0  &  9  &  1.64  &  12.5 \\
55  &  43  &  0.51  &  6.82  &  4  &  1.81  &  12.1 \\
60  &  42  &  0.44  &  7.2  &  3  &  1.85  &  16.24 \\
\hline
\end{tabular}
\caption{Comparison between the best heuristic reported in \cite{averbakhEJOR} and Heuristic MST-LOC-NET for Problem L. 80 instances per each size group.}
\label{lateness-comparison}
\end{center}
\end{table}

\begin{table}[htbp]
\begin{center}
\begin{tabular}{lcccccc}
\hline
  & \multicolumn{3}{c}{Heuristic MST-LOC-NET} & \multicolumn{3}{c}{Best heuristic in paper}  \\
 $|V|$ & \# better & av. gap & max. gap & \# better & av. gap & max. gap \\
\hline
20  &  4  &  0.02  &  1.35  &  0  &  0.05  &  1.48 \\
25  &  4  &  0.04  &  1.6  &  0  &  0.08  &  1.6 \\
30  &  2  &  0.14  &  2.05  &  1  &  0.14  &  2.05 \\
35  &  2  &  0.1  &  2.59  &  1  &  0.09  &  2.59 \\
40  &  4  &  0.06  &  1.64  &  0  &  0.08  &  1.64 \\
45  &  2  &  0.05  &  1.3  &  2  &  0.04  &  1.3 \\
50  &  5  &  0.1  &  1.98  &  1  &  0.12  &  1.98 \\
55  &  7  &  0.05  &  1.0  &  0  &  0.1  &  1.62 \\
60  &  8  &  0.04  &  1.43  &  1  &  0.08  &  1.92 \\
70  &  6  &  0.05  &  0.99  &  5  &  0.05  &  0.97 \\
80  &  8  &  0.05  &  1.46  &  1  &  0.09  &  1.93 \\
90  &  16  &  0.08  &  1.06  &  2  &  0.19  &  1.97 \\
100  &  14  &  0.05  &  1.11  &  1  &  0.09  &  1.61 \\
\hline
\end{tabular}
\caption{Comparison between the best heuristic reported in \cite{averbakhJOC} and Heuristic MST-LOC-NET for Problem L-ETPC. 100 instances per each size group.}
\label{pairwise-comparison}
\end{center}
\end{table}

\end{document}